\declaretheorem{proposition}
\tikzset{
  ->,
  >=stealth',
  node distance=3cm,
  every state/.style={thick, fill=gray!10},
  initial text=$ $,
}
\definecolor{codegreen}{rgb}{0,0.6,0}
\definecolor{codegray}{rgb}{0.5,0.5,0.5}
\definecolor{codepurple}{rgb}{0.58,0,0.82}
\definecolor{backcolour}{rgb}{0.95,0.95,0.92}
\lstdefinestyle{mystyle}{
    aboveskip=1em,
    belowskip=0.2em,
    backgroundcolor=\color{backcolour},   
    commentstyle=\color{codegreen},
    keywordstyle=\color{magenta},
    numberstyle=\tiny\color{codegray},
    stringstyle=\color{codepurple},
    basicstyle=\footnotesize\ttfamily,
    breakatwhitespace=false,         
    breaklines=true,                 
    captionpos=b,                    
    keepspaces=true,                 
    numbers=left,                    
    numbersep=3pt,                  
    showspaces=false,                
    showstringspaces=false,
    showtabs=false,                  
    tabsize=2
}
\newcommand{\cA}{\mathcal{A}}
\newcommand{\cP}{\mathcal{P}}
\newcommand{\cR}{\mathcal{R}}
\newcommand{\cS}{\mathcal{S}}
\newcommand{\rE}{\mathbb{E}}
\newcommand{\rR}{\mathbb{R}}
\title{Inverse {R}einforcement {L}earning with {M}ultiple {R}anked {E}xperts}
\author{
  Pablo Samuel Castro\thanks{Work done while at Institut Mines-TELECOM/TELECOM SudParis} \\
  Google Brain \\
  \texttt{psc@google.com} \\
  \And
  Shijian Li \\
  Zhejiang University
  \And
  Daqing Zhang \\
  Institut Mines-TELECOM\\TELECOM SudParis
}
\begin{document}

\maketitle

\begin{abstract}
  We consider the problem of learning to behave optimally in a Markov Decision
  Process when a reward function is not specified, but instead we have access
  to a set of demonstrators of varying performance. We assume the demonstrators
  are classified into one of $k$ ranks, and use ideas from ordinal regression
  to find a reward function that maximizes the {\em margin} between the
  different ranks.  This approach is based on the idea that agents should not
  only learn how to behave from experts, but also how {\em not} to behave from
  non-experts. We show there are MDPs where important differences in the reward
  function would be hidden from existing algorithms by the behaviour of the
  expert. Our method is particularly useful for problems where we have access
  to a large set of agent behaviours with varying degrees of expertise (such as
  through GPS or cellphones). We highlight the differences between our approach
  and existing methods using a simple grid domain and demonstrate its efficacy
  on determining passenger-finding strategies for taxi drivers, using a large
  dataset of GPS trajectories.
\end{abstract}

\section{Preamble/Disclaimer}
{\em This preamble was written by Pablo Samuel Castro}

This is a paper I wrote during my postdoc with Daqing Zhang at the Institut
Mines-TELECON/TELECOM SudParis.  We submitted it to NIPS (now called
NeurIPS) 2011 and it was rejected. I finished my postdoc shortly after the rejection
and joined Google as a software engineer, saying goodbye to academia for ever
(at the time I couldn't envision a future where I'd be lucky enough to re-join the
academic world). This was at a time when putting papers up on ArXiv was not a
common thing (at least not for me), so the paper lay dormant in my e-mail.

During a recent visit to Scott Niekum's group in UT Austin I decided to dust it
off, as they're doing research on inverse reinforcement learning. One of his
students, Daniel Brown\footnote{Check out some of his recent papers, including \citep{brown19extrapolating,brown18efficient}.}
mentioned that he liked the proposition I have in this paper, and wondered how
he could cite it. And that is what led me to put this up.

This paper is presented as-is: except for a few minor grammatical corrections,
it is unchanged from my original submission. I considered revising it and
bringing the related-work section up-to-date, but upon re-reading it there are
many things I feel would need to be changed, updated, or completely rewritten.
In short: this is not a paper I would send out for review in its current form.
\footnote{Some of the things I would do are: Include an appendix with all the
results discussed but not included; fix the colors on the GridWorld plots (they
make no sense as they are right now!); run on more domains.}

So why am I putting up on ArXiv then? I do feel that some of the ideas
presented here are interesting enough that they could perhaps be of value to
other researchers in the field (at least \autoref{eqn:prop} seemed interesting
enough to Daniel to be worth citing).

However, I am not currently working on this problem, I no longer have access to
the code I wrote for it, nor to the GPS data I used, so it is extermely
unlikely that I will work on an updated-and-improved version of this paper. If
you find this idea interesting and want to make a proper paper out of it, be my
guest!

\section{Introduction}
\label{sec:intro}
Markov Decision Processes (MDPs) are a popular mathematical framework for
encoding a sequential decision making problem, where an agent aims to find an
optimal way of acting in said environment. The optimality of the agent’s
behaviour has traditionally been quantified by means of a reward/cost function,
supplied as part of the MDP. Whether the optimal behaviour is computed directly
or learned through experience, the reward function plays a central role in this
process, enabling the agent to rank the actions according to their expected
numerical return.

Many problems have been successfully encoded as MDPs and have produced
successful behaviours.  Nevertheless, there are many problems for which
specifying a reward function can be very difficult.  Consider the problem of
learning to ride a bicycle: although a reward function could be specified that
gives a penalty for falling down or losing one’s balance, there are many other
factors which characterize a good rider. Indeed, when we are taught to ride a
bicycle, our teachers describe or demonstrate the type of behaviour that will
enable us to ride the bicycle properly. In these situations, it is often easier
to demonstrate an optimal behaviour, and have the agent attempt to emulate it.
This problem is known as imitation learning, apprenticeship learning, learning
from demonstration, amongst others, and has been studied extensively during the
last two decades, often for robotics applications using supervised learning
methods \citep{hayes94robot,amit02learning,pomerleau89alvinn}. More recently, \cite{abbeel04apprenticeship} proposed an apprenticeship learning
algorithm that guarantees the learned policy will be close to the expert’s
policy.

A similar approach is to construct a reward function under which the expert’s
behaviour is optimal.  In the Artificial Intelligence community this is known
as the Inverse Reinforcement Learning (IRL) problem and was first formally
studied in \citep{ng00algorithms}. This approach has the advantage of producing a reward function
which can be used afterwards (such as for Reinforcement Learning (RL)
algorithms), and reward functions are usually a more compact representation of
a domain than a policy.

Both approaches mentioned above are based on the observed behaviours of an
expert.\footnote{In most cases a single expert is considered, although \citep{ratliff06maximum} is
a notable exception.} In other words, the sole purpose of the demonstrations is
to inform the agent of what it should do. This is somewhat in contrast to the
way humans learn: we learn to behave not only by attempting to imitate experts,
but also by avoiding the behaviours of non-experts. The field of imitation
learning is based on the assumption that the expert’s behaviour is optimal with
respect to an underlying reward function, and as long as there are enough
expert demonstrations to “cover” the environment, the extracted reward function
should be sufficient to produce a similar behaviour. This may be so, but as we
will prove below, the resulting reward function may be rather superficial, as
it may fail to disclose certain “negative” aspects of the true reward function.
Moreover, if the expert is aware of certain areas in the environment that have
low reward, she may avoid them altogether, resulting in incomplete information
about the environment.

It is usually assumed that an expert is ``requested'' to act in an enviornment,
from which the behaviours can be extracted. This is not problematic in
simulated or controlled environments, but may be difficult in large, real-world
problems. Nevertheless, due to the growing pervasiveness of devices capable of
logging a user’s contextual information (such as location, actions, etc.), we
may have access to many experts and non-experts. This paper introduces a method
for Inverse Reinforcement Learning using multiple experts with varying degrees
of expertise. Our approach makes use of maximum margin ordinal regression
algorithms \citep{shashua02ranking} to produce a reward function that preserves the ranking of our
experts, as well as maximizing the margin between the different ranks.  We
illustrate our approach on the problem of finding optimal taxi driver
strategies for finding new passengers, where we make use of the driving
trajectories from multiple GPS-equipped taxis. As we will demonstrate, our
approach is able to produce meaningful and useful results.

The paper is organized as follows. In \autoref{sec:preliminaries} we introduce
Markov Decision Processes and discuss the problem of apprenticeship learning,
along with related works. We present our proposed method in
\autoref{sec:rankIRL}, along with a proof of the “superficiality” of solutions
based solely on expert’s behaviours. In \autoref{sec:illustration} we highlight
the advantages of our approach over existing approaches using a simple grid,
and demonstrate the efficacy of our method on a large problem involving
providing passenger-finding strategies to taxi drivers in \autoref{sec:taxi}.
We conclude our results and discuss future avenues of research in
\autoref{sec:conclusion}.

\section{Preliminaries}
\label{sec:preliminaries}
\subsection{Markov Decision Processes}
A finite Markov Decision Process (MDP) is a 4-tuple $\langle\cS, \cA, \cP,
\cR\rangle$, where $\cS$ is a finite set of states, $\cA$ is a finite set of
actions available at each state, $\cP:\cS\times\cA\rightarrow
Dist(\cS)$\footnote{$Dist(X)$ is the set of all probability distributions on a
set $X$.} is a probabilistic transition function, and $\cR:\cS\rightarrow\rR$
is the reward function.

The behaviour of an agent in an MDP is formalized as a policy
$\pi:\cS\rightarrow\cA$; let $\Pi$ be the set of all policies. We define the
value of a policy $\pi$ from a state $s_0$ as $V^{\pi}_{s_0} =
\rE\left[\sum_{t=0}^{\infty}\gamma^t r_t | s_0, \pi\right]$, where
$0\leq\gamma < 1$ is a discount factor and $r_t$ is a random variable
representing the reward received at time step $t$, given that we started at
state $s_0$ and followed policy $\pi$. We may sometimes refer to $V^{\pi}$ as
the $|S|$-dimensional vector containing the values for all states.

In most situations, one is interested in finding an {\em optimal} behaviour:
a policy $\pi^*$ satisfying $V^{\pi^*}\geq V^{\pi}$ for all $\pi\in\Pi$.
If the MDP parameters are known, there are a number of exact and approximate
methods for finding $\pi^*$ \citep{puterman94mdp}. Reinforcement learning is a popular approach
to learning to act optimally when neither the transition or reward functions are
known {\em a priori}, but are revealed to the agent as it interacts with the
environment \citep{sutton98rl}.

\subsection{Apprenticeship Learning}
In the situations mentioned above, it is implicitly assumed that the agent has
access to a fully specified reward function. Although this is a reasonable
assumption in many problems, there are many situations where specifying a reward
function can be difficult. In these situations, it may be easier to specify or
demonstrate a desired behaviour, rather than specify a reward function.

For the rest of the paper we will assume that MDPs are not equipped with a
reward function; when a reward function $\cR$ is specified, we will denote the
resulting value function under policy $\pi$ as $V^{\pi}_{\cR}$. The Inverse
Reinforcement Learning (IRL) problem aims to determine a reward function that
``explains'' the behaviour, or policy $\pi_E$, of an expert $E$. Specifically,
it aims to find a reward function $\cR$ such that $\pi_E =
\arg\max_{\pi\in\Pi}V^{\pi}_{\cR}$. In \citep{ng00algorithms} the authors characterize the set of
reward functions with this property. They point out that many of these reward
functions will be degenerate, and suggest some heuristics to overcome this
degeneracy. The heuristics are based on the principle of choosing a reward
function that maximizes the difference between the value of the expert's policy
and other, sub-optimal, policies. In \citep{ziebart08maximum} the authors apply the principle of
maximum entropy to overcome this degeneracy.

This maximum margin principle lies at the heart of the algorithms proposed in
\citep{abbeel04apprenticeship} and \citep{ratliff06maximum}. \cite{abbeel04apprenticeship} assume there is a feature vectore
$\phi:\cS\rightarrow [0,1]^d$ and that the reward function can be expressed as
a linear combination of these features: $\cR(s) = w\cdot\phi(s)$, where
$w\in\rR^d$ is a weight vector. Under this assumption, it can be shown that
\[ \rE_{s_0\sim D}[V^{\pi}(s_0)] = w\cdot\rE\left[\sum_{t=0}^{\infty}\gamma^t \phi(s_t) | \pi\right] = w\cdot\mu(\pi) \]
where $D$ is an initial state distribution, $s_t$ is a random variable
representing the state at time step $t$ when starting at $s_0\sim D$ and
following policy $\pi$, and $\mu(\pi)$ are the {\bf feature expectations} of
policy $\pi$. The authors assume access to an estimate of the expert's feature
expectations, $\mu_E = \mu(\pi_E)$. Their algorithm begins with a random policy
$\pi_0$, and at each iteration $i$ chooses a weight vector $w_i$ that maximizes
the margin $t_i$ between $w\cdot\mu_E$ and $w\cdot\mu(\pi_j)$ for all $j < i$;
$\pi_i$ is then set to be an optimal policy for the MDP augmented with the
reward function $\cR = w_i\cdot\phi$ and the process is repeated until $t_i$
goes below a pre-specified threshold. As the authors point out, at each
iteration we are finding the maximum margin hyperplane separating $\mu_E$ from
the set of $\mu_i$, for which a Support Vector Machine (SVM) or Quadratic
Programming (QP) solver can be used.

The maximum margin approach was further studied in \citep{ratliff06maximum}, where the authors use
a pre-defined loss function that quantifies the ``closeness'' of the computed
policy to the expert's policy. The chosen loss function directly impacts the
performance of the algorithm; indeed, the resulting margin will scale with the
loss function. Rather than starting from a single expert and computing new
policies at each iteration as in \citep{abbeel04apprenticeship}, the authors assume access to a set of
experts (each potentially defined in a separate MDP). Their algorithm seeks to
find a weight vector $w$ such that the optimal policy resulting from each input
MDP augmented with the reward function $w\cdot\phi$ is ``close'' to the supplied
expert's behaviour.

\section{Rank IRL}
\label{sec:rankIRL}
As mentioned in the introduction, there are many situations where we cannot
simply request an expert to interact with an environment in order to observe
their behaviour. However, with the surge in ubiquitous devices such as GPS and
mobile phones, we may have access to many experts. Although we may combine to
obtain an ``average'' expert, we can capitalize on the difference amongst the
various experts, and in particular, non-experts. The following result
demonstrates that the reward functions produced by an IRL or apprenticeship
learning solver may be quite superficial.

\begin{proposition}
  \label{eqn:prop}
  There exists an MDP with true reward function $\cR^*$, expert policy $\pi_E$,
  approximate reward function $\hat{\cR}$, and non-expert policies $\pi_1$ and
  $\pi_2$ such that
  \begin{align*}[lr]
    \pi_E = \arg\max_{\pi\in\Pi} V^{\pi}_{R^*} \qquad\qquad\qquad\qquad & V^{\pi_1}_{\cR^*} \ll V^{\pi_2}_{\cR^*} \\
    \pi_E = \arg\max_{\pi\in\Pi} V^{\pi}_{\hat{\cR}} \qquad\qquad\qquad\qquad & V^{\pi}_{\hat{\cR}} = V^{\pi_2}_{\hat{\cR}}
  \end{align*}
\end{proposition}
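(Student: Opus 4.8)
The proposition is an existence statement, so the cleanest strategy is to \emph{construct} an explicit MDP witnessing all four conditions simultaneously. The plan is to build a small deterministic MDP in which the expert policy $\pi_E$ reaches a high-reward goal while deliberately steering clear of a ``trap'' region whose true reward $\cR^*$ is strongly negative. Because the expert never visits the trap, any reward function $\hat{\cR}$ that merely makes $\pi_E$ optimal is free to assign the trap an arbitrary (e.g.\ neutral) value. I would exploit exactly this freedom: take $\hat{\cR}$ to agree with $\cR^*$ on states the expert actually traverses, but flatten the penalty on the unvisited trap states to zero. One then checks that $\pi_E$ remains optimal under $\hat{\cR}$, since optimality is determined only by the reward along reachable, on-policy states.

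First I would fix the topology: a handful of states arranged so there are two alternative ``bad'' routes, one passing through the trap (call its induced policy $\pi_1$) and one avoiding it (policy $\pi_2$), both of which are suboptimal compared to $\pi_E$ but which the true reward sharply distinguishes. Second, I would assign $\cR^*$ so that the trap carries a large negative reward, forcing $V^{\pi_1}_{\cR^*}\ll V^{\pi_2}_{\cR^*}$ via the discounted sum $V^\pi_{s_0}=\rE[\sum_{t=0}^\infty \gamma^t r_t\mid s_0,\pi]$; concretely, making the trap penalty exceed the accumulated $\gamma$-discounted differences everywhere else. Third, I would define $\hat{\cR}$ by zeroing out the reward on the trap (and on any state unreachable under $\pi_E$), leaving the rest intact. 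Fourth, I would verify the two left-column claims: that $\pi_E=\arg\max_\pi V^\pi_{\cR^*}$ by a direct value comparison, and that $\pi_E=\arg\max_\pi V^\pi_{\hat{\cR}}$ because the reward modification only touches off-policy states that an optimal policy already has no incentive to enter. Finally, under $\hat{\cR}$ the trap no longer penalizes $\pi_1$, so the two routes become indistinguishable, giving $V^{\pi_1}_{\hat{\cR}}=V^{\pi_2}_{\hat{\cR}}$.

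The main obstacle is calibrating the numbers so that \emph{all four} conditions hold at once without conflict. In particular, the penalty on the trap must be large enough that $V^{\pi_1}_{\cR^*}\ll V^{\pi_2}_{\cR^*}$ under $\cR^*$, yet its removal in $\hat{\cR}$ must not accidentally make $\pi_1$ or $\pi_2$ beat $\pi_E$, nor disturb the optimality of $\pi_E$. I would handle this by keeping the goal reward dominant enough that $\pi_E$ wins under both reward functions, and by ensuring $\pi_1$ and $\pi_2$ share identical reward streams outside the trap so that equality $V^{\pi_1}_{\hat{\cR}}=V^{\pi_2}_{\hat{\cR}}$ becomes automatic once the trap is neutralized. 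A secondary subtlety is confirming that the trap states are genuinely unreachable under $\pi_E$ (so that editing their reward is legitimate and leaves $V^{\pi_E}$ unchanged for both $\cR^*$ and $\hat{\cR}$); a deterministic transition structure makes this immediate. Since the claim is purely existential, a single well-chosen $4$–$5$ state example with explicit reward values suffices, and no generality argument is needed.
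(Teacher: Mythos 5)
Your proposal is correct and follows essentially the same route as the paper's own proof: the paper also constructs a small deterministic MDP (four states, three absorbing) in which the expert heads to a $+1$ state, one non-expert policy enters a $-\delta$ ``trap'' state the expert never visits, a second non-expert policy enters a neutral $0$ state, and the recovered reward $\hat{\cR}$ keeps the expert optimal while zeroing out the trap, making the two non-expert policies indistinguishable. The only difference is that the paper commits to explicit reward values immediately, which makes your ``calibration'' concern trivial, since the absorbing structure keeps the value comparisons one-line computations.
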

\begin{proof}
  Consider the following MDP with deterministic transitions. Each transition is
  labelled by the action name, and the true reward (if any) received upon
  entering a state is indicated in the state's node.

  \begin{figure}[!h]
    \centering
    \begin{tikzpicture}
      \node[state] (s0) {$s_0$};
      \node[state, right of=s0] (s2) {$s_2 (+1)$};
      \node[state, above of=s2] (s1) {$s_1 (-\delta)$};
      \node[state, below of=s2] (s3) {$s_3 (0)$};

      \draw (s0) edge[above] node{a} (s1)
            (s0) edge[above] node{b} (s2)
            (s0) edge[above] node{c} (s3)
            (s1) edge[loop right] node{a, b, c} (s1)
            (s2) edge[loop right] node{a, b, c} (s2)
            (s3) edge[loop right] node{a, b, c} (s3);
    \end{tikzpicture}
  \end{figure}
  Here $\delta>0$ is some arbitrary constant. Obviously $\pi_E(s_0) = a$, and
  it is easy to see that $\hat{R}(s_1) = 1$, $\hat{R}(s_2) = \hat{R}(s_3) = 0$,
  $\pi_1(s_0) = b$ and $\pi_2(s_0) = c$ yield the desired result.
\end{proof}

Our method will make use of feature expectations, as policies are usually
difficult to specify precisely. Let us assume that we have access to the
expected features of a number of demonstrators: $\lbrace
\mu_i\rbrace_{i=1}^{N}$, and to a ranking function $rand(\mu_i) = 1, \cdots, k$
that classifies each expected feature into one of $k$ ranks. Let us also define
the set $rank_r = \lbrace\mu_i | rank(\mu_i) = r\rbrace$.  We wish to find a
weight vector $w^*$ such that for all $i\ne j, w\cdot\mu_i < w\cdot\mu_j$ if
and only if $rank(\mu_i) < rank(\mu_j)$. For all $1\leq r < k$ we would also
like to maximize $\rho_r$, where
\[ \rho_r = \min_{\mu_i\in rank_r,\mu_j\in rank_{r+1}}w\cdot\mu_j - w\cdot\mu_i \]
In other words, $w$ should maximize the difference between ranks.

Ranking the instances properly is an instance of the ordinal regression
problem, and applying the maximum margin principle yields the following
(primal) quadratic program (QP):
\begin{align}
  \label{eqn:qp}
  \min_{w,a_r,b_r,\varepsilon^r_i,\varsigma^{r+1}_i} & \sum_{r=1}^{k-1}(a_r - b_r) + C\sum_{r}\sum_{i}(\varepsilon_i^r + \varsigma_{i}^{r+1}) & \\
  \textrm{subject to} \qquad & a_r\leq b_r & \forall\quad 1\leq r < k \nonumber \\
  & b_r \leq a_{r+1} & \forall\quad 1\leq r < k - 1 \nonumber \\
  & w\cdot\mu_i\leq a_r + \varepsilon^r_i & \forall\quad\mu_i\in rank_r \nonumber \\
  & b_r - \varsigma^{r+1}_i\leq w\cdot\mu_i & \forall\quad\mu_i\in rank_{r+1} \nonumber \\
  & \| w\|\leq 1 & \nonumber \\
  & \varepsilon^r_i,\varsigma^{r+1}_i\geq 0 & \forall\quad 1\leq r < k \nonumber
\end{align}

\begin{figure}
  \centering
  \includegraphics[width=0.3\textwidth]{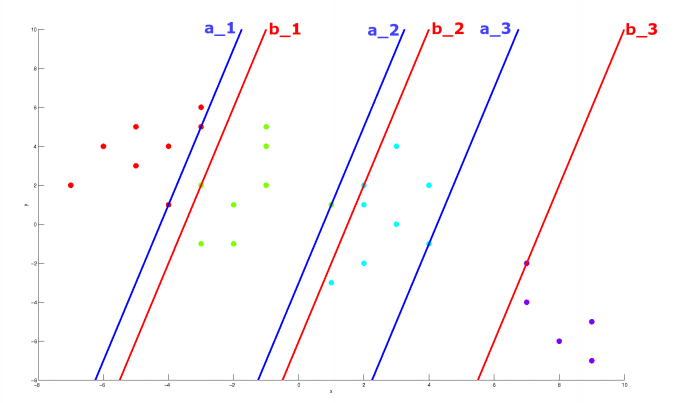}
  \caption{Three pairs of hyperplanes separating 4 ranks of points. TODO: make
  a prettier version of this.}
  \label{fig:hyperplanes}
\end{figure}

This is the sum-of-margins strategy introduced in \citep{shashua02ranking}. Traditional SVM solvers
maximize a single margin between two classes of points (a notion which is
generalized to ordinal regression in \citep{shashua02ranking} as the fixed-margin strategy). In our
current situation we wish to maximally distinguish between ranks, which
corresponds to maximizing $k-1$ margins. Rather than searching for a single
separating hyperplane, we are searching for $2(k - 1)$ parallel hyperplanes
with normal vector $w$ (see \autoref{fig:hyperplanes}). Note that one of the
constraints in QP \autoref{eqn:qp} is that the norm of $w$ is at most one. In
\citep{shashua02ranking} they demonstrate that using this convex constraint is equivalent to using
the non-convex constraint $\|w\| = 1$.  Hence, the margin between ranks $r$ and
$r+1$ is given by $b_r - a_r$. The performance of our method depends directly
on the quality of the supplied ranking. The slack variables $\varepsilon^r_i$
and $\varsigma^{r+1}_i$ allow for some instances to be misclassified or be on
the wrong side of $a_r$ or $b_r$, respectively. The constant $C$ trades off
between the sum of the margins and margin errors (see \citep{shashua02ranking} for a lengthier
discussion of the meaning of $C$). We set $C = 1$ for all of our experiments
below. In the conclusion we suggest some simple mechanisms for dealing with
unreliable rankings.

\begin{figure}
  \begin{subfigure}{0.5\textwidth}
    \centering
    \includegraphics[width=\textwidth]{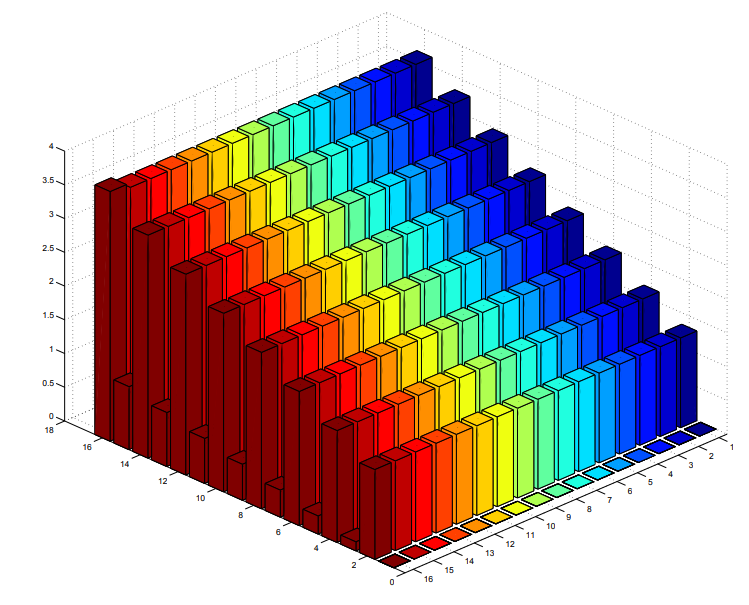}
  \end{subfigure}
  \begin{subfigure}{0.5\textwidth}
    \centering
    \includegraphics[width=\textwidth]{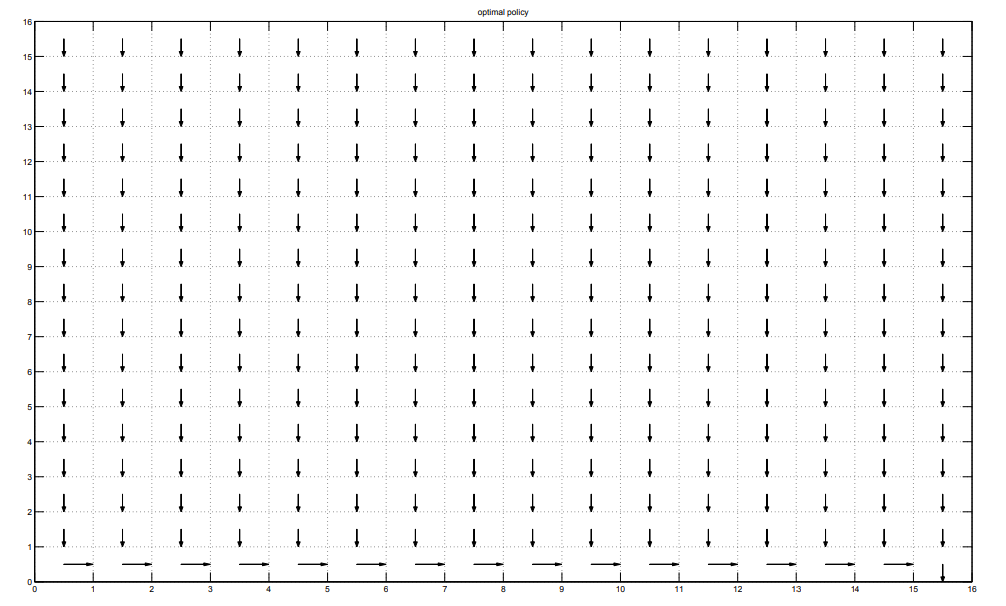}
  \end{subfigure}
  \caption{Reward function (left) and optimal policy (right) for 16x16 grid.
  The reward function plot has been rotated for clarity; the top left state in
  the left panel corresponds to the bottom-right state in the right panel.
  TODO: make these prettier.}
  \label{fig:gridworld}
\end{figure}

\section{Illustration}
\label{sec:illustration}
As mentioned above, our method is motivated by the fact that agents can learn
how not to act from sub-optimal demonstrations. \autoref{eqn:prop}
demonstrates that the policy/behaviour of an expert can obfuscate certain
aspects of the true underlying reward function. We illustrate this point
further by comparing our method against the apprenticeship learning algorithm
of \cite{abbeel04apprenticeship} on a simple grid domain.  Consider a 16x16 room with reward function as
given in the left panel of \autoref{fig:gridworld} and optimal policy in the
right panel. As can be seen, even though there is a clear advantage to being in
the even rows as opposed to the odd rows, this advantage is not evident in the
optimal policy. This optimal policy was used to generate the expert’s expected
features for both algorithms.

We ran the algorithm from \cite{abbeel04apprenticeship} against our method (henceforth referred to as
RankIRL). We used a ``lossless'' feature vector, that is, one feature for every
grid cell. Thus, the $w$ returned by the algorithms would exactly correspond to
the reward function. We used four types of policies to generate three types of
expected features for our method:
\begin{itemize}
  \item \textbf{First rank (expert):} The optimal policy 
  \item \textbf{Second rank:} A policy that avoids odd rows and goes right
    only on even rows, then goes down once in the last column
  \item \textbf{Third rank:} The policy is similar to the second, but even
    rows are avoided and odd rows are followed towards the right
  \item \textbf{Fourth rank:} A policy that avoids even rows and follows odd
    rows to the left
\end{itemize}

We assume a uniform distribution over all possible cells for both algorithms.
In \autoref{fig:gridworld_results} we compare the reward functions of the two
algorithms, where it is evident that our method produces a reward function that
prefers even rows over odd rows (most evident in the states on the right-hand
side). The algorithm of \cite{abbeel04apprenticeship} displayed much higher variability than our method
due to the random policy used in the first iteration. As is pointed out by the
authors, although their method is guaranteed to produce expected features close
to those of the expert's, it is unfortunate that the resulting vector $w$ is
unreliable. We also computed the ratio of the performance of the policies from
both algorithms against the performance of the true optimal policy, and our
method consistently had an advantage of at least 20\% over the algorithm of \cite{abbeel04apprenticeship}.
This behaviour was consistent as the number of sampled trajectories used to
generate the expected features were varied. A visual inspection of the
resulting policies also reveals that our method explicitly avoids odd rows,
while the policy produced by the algorithm of \cite{abbeel04apprenticeship} is oblivious to the
difference between even and odd states. We have not included these figures for
lack of space.  Finally, we also compared our algorithm against the MMP
algorithm of \cite{ratliff06maximum} and observed similar results. This is not surprising, since
their method is still based on emulating one or more experts.

\begin{figure}
  \begin{subfigure}{0.5\textwidth}
    \centering
    \includegraphics[width=\textwidth]{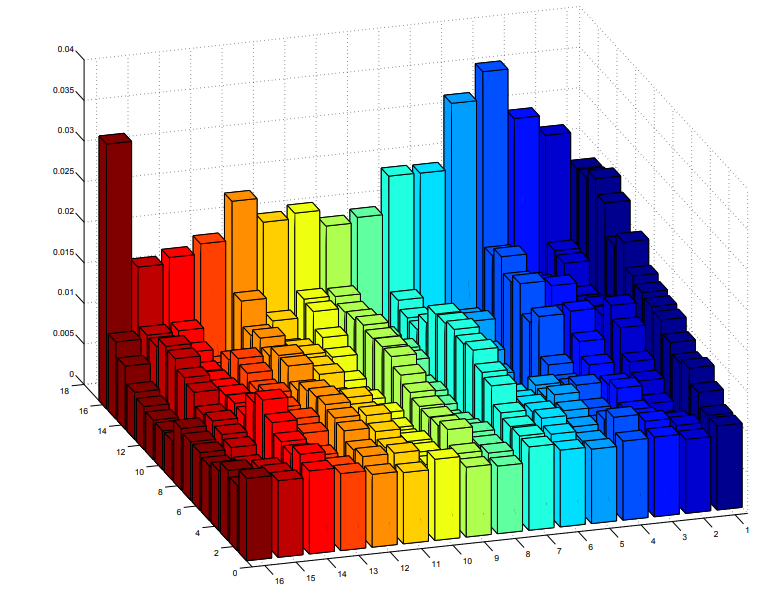}
  \end{subfigure}
  \begin{subfigure}{0.5\textwidth}
    \centering
    \includegraphics[width=\textwidth]{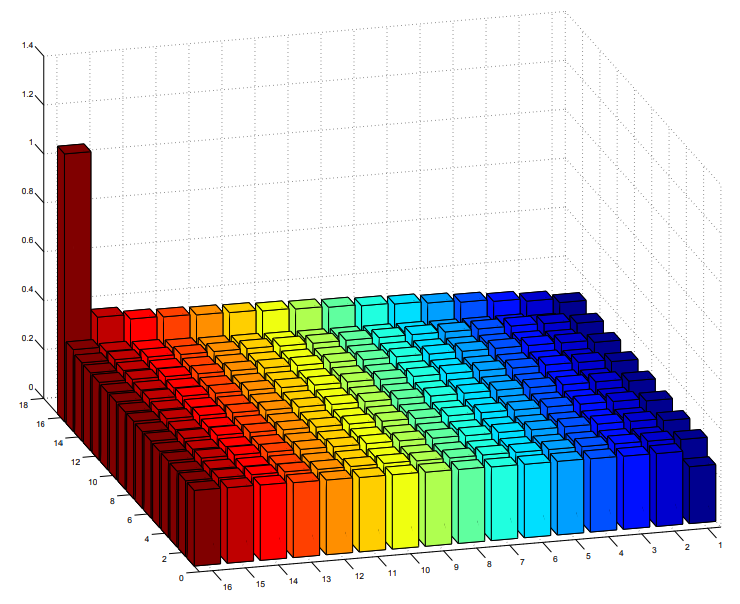}
  \end{subfigure}
  \caption{Reward function returned by the algorithm of \cite{abbeel04apprenticeship} (left) and rankIRL
  (right). These plots have been rotated in the same way as the left panel in
  \autoref{fig:gridworld}.}
  \label{fig:gridworld_results}
\end{figure}

\section{Application: Passenger finding routes for taxi drivers}
\label{sec:taxi}
We have access to a large dataset consisting of 5000 taxi drivers’ GPS data
collected over a year in Hangzhou, China. Amongst other things, each GPS entry
gives us the taxi’s unique ID, its location (in latitude/longitude), speed and
state (occuppied/vacant). This data is logged approximately once per minute. We
constructed a digital representation of the road network of Hangzhou based on
the accumulated taxi trajectories, and mapped each GPS entry to a road segment.
Mapping the analogue trajectories onto a discrete domain essentially removed
noise and small variations in the trajectories.

An important problem for taxi drivers is determining the best strategy to find
a new passenger when unoccupied. Existing approaches include using a naive
Bayesian classifier to predict the number of vacant taxis in different areas
\citep{phithakkitnukoon10taxi}, using hand-filled surveys and simple statistical analyses \citep{takayama11waiting}, using
$k$-means clustering coupled with temporal analysis \citep{hayes94robot}, and using $L1$-norm SVM to
determine whether a taxi driver should ``hunt'' or wait \citep{li11hunting}. The granularity of
these approaches is quite coarse, as most rely on splitting the city into
equal-sized grids. It would be more useful to provide passenger finding
assistance to taxi drivers on a road-by-road basis.

To test our method on this problem, we decided to focus on the weekdays of a
single month. Each taxi driver was ranked according the proportion of time
during the day they spent unoccupied (lower is better), since taxis with a low
proportion probably have a good passenger-finding strategy. The size of the
state space for our MDP was defined to be twice the number of road segments,
resulting in two states for each road segment, one for each
orientation.\footnote{The fact that certain roads may be unidirectional is not
a problem, as we should not have any trajectories going in the wrong
direction!} The road segment and orientation is quite important for this
problem, as traffic conditions and speed limits can vary greatly amongst road
segments, and even within the same segment, depending on orientation. Thus, we
decided to once again use a lossless feature vector, with one feature for every
possible state.

Our road network has 2203 road segments, which means we are searching for a
weight vector $w\in\rR^{4406}$.  Solving a QP in this high dimensional setting
was not possible in our machines\footnote{This was in 2011}, so we took
advantage of the ``Cartesian'' nature of our domain and split the city into
disjoint areas. The areas were obtained dynamically by splitting the city
according to most frequently traversed road intersections.  The details of this
decomposition method will be ommitted as it is outside the scope of this paper.
The decomposition we used is displayed in \autoref{fig:cityDecomposition}, and
all of the trajectories were split according this city decomposition. We
computed a weight vector $w$ for each disjoint cluster and combined them
afterwards. This reduced the dimensionality of our feature vectors from 2203 to
no more that 130 dimensions and enabled us to compute the solutions in
parallel.

\begin{figure}
  \centering
  \includegraphics[width=0.4\textwidth]{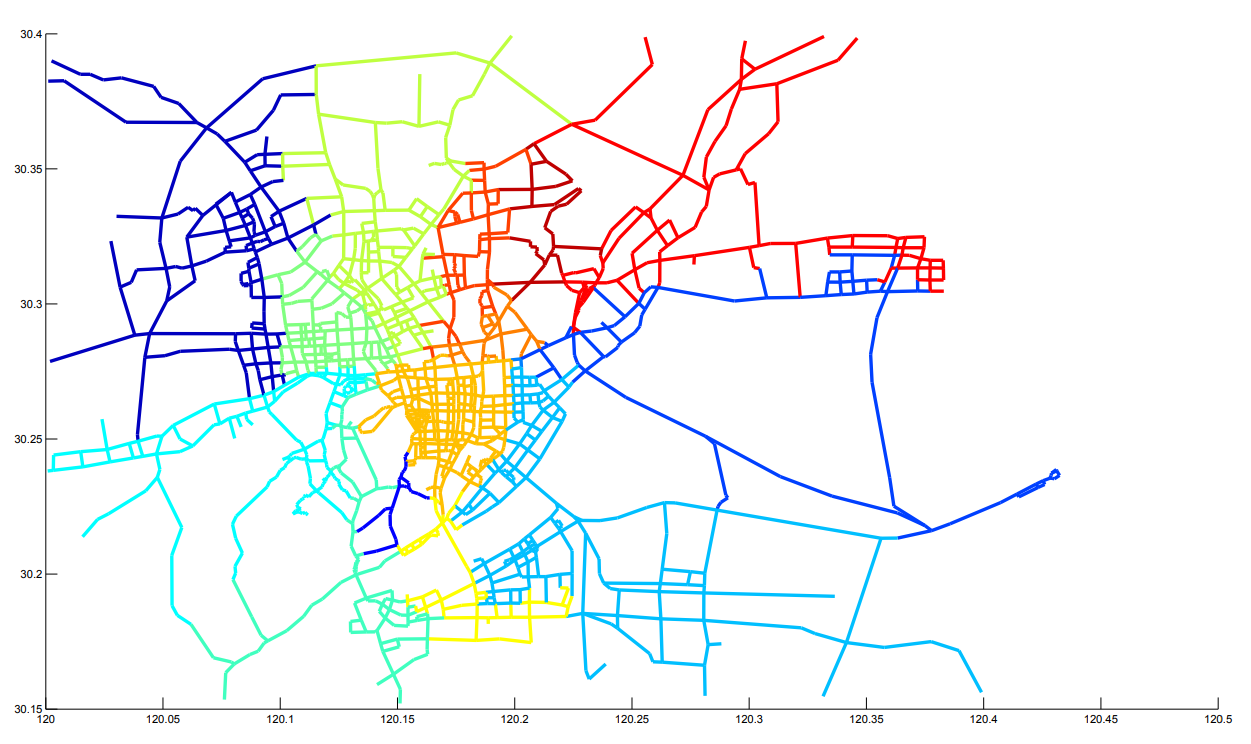}
  \caption{Decomposition of the road network.}
  \label{fig:cityDecomposition}
\end{figure}

We chose 10 taxis for each rank (expert, mid-level, and low) based on the
scores mentioned above and computed the expected features from the GPS
trajectories. We display the average expected features for the highest ranked
taxis in the left panel of \autoref{fig:city_values}. We used the computed
weight vector $w$ as a reward function and computed the value function for the
road network encoded as an MDP;  the resulting value function is displayed in
the right panel of \autoref{fig:city_values}. It is interesting to note that
our algorithm recognizes certain important passenger pickup areas (such as the
airport) even though the experts do not have very high expected features there.
The areas that have highest value are in accordance with places that have
intuitively higher taxi demand, such as bus stations and hospitals. Our method
is also able to recognize roads with poor passenger pickup records, such as in
the mountainous area, where people do not generally wait for taxis. Although it
seems somewhat surprising that the downtown area is given a relatively low
value, this is in accordance to the results found in \citep{liu10uncovering}, where they uncovered
the fact that the most successful taxi drivers do not always search for
passengers in areas with highest demand (such as in the downtown area), as
these areas are often prone to high levels of traffic. Using this value
function we can provide taxi drivers with passenger-finding strategies on a
much finer granularity than existing methods.

\begin{figure}
  \begin{subfigure}{0.5\textwidth}
    \centering
    \includegraphics[width=\textwidth]{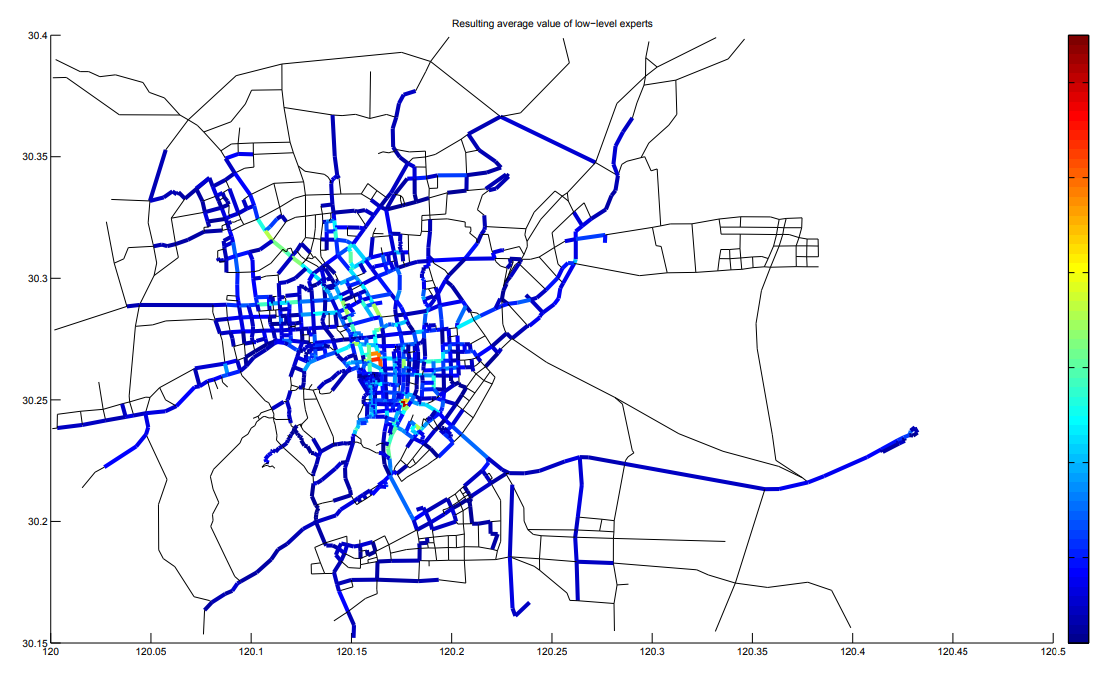}
  \end{subfigure}
  \begin{subfigure}{0.5\textwidth}
    \centering
    \includegraphics[width=\textwidth]{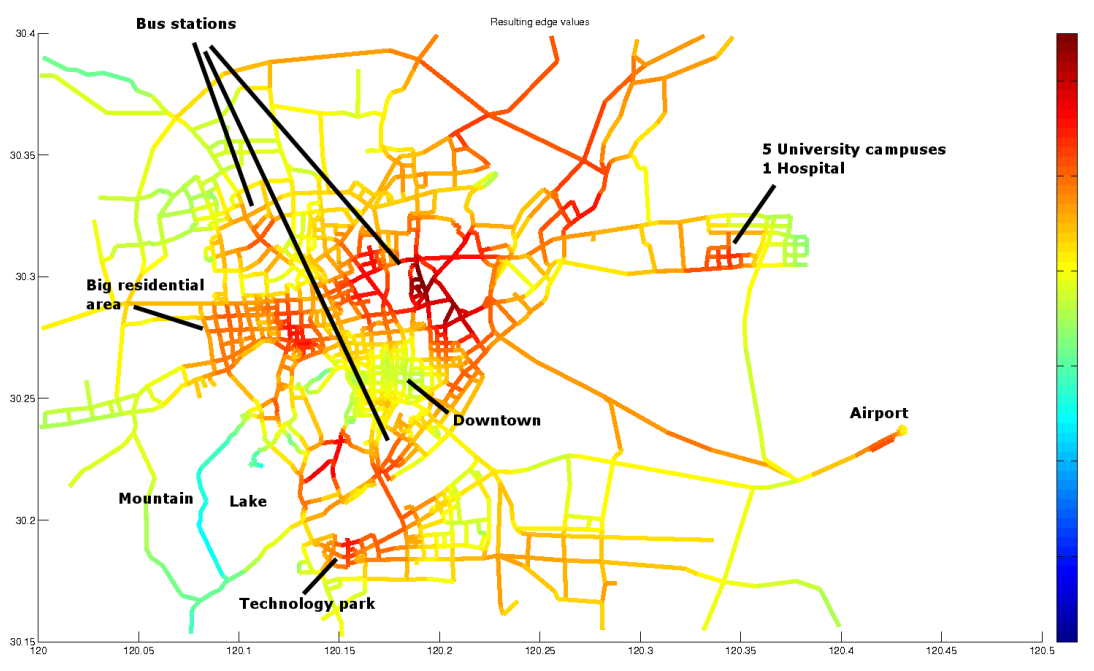}
  \end{subfigure}
  \caption{Average expected features for highest ranked taxis (left) and
  resulting value function using computed $w$ (right). Note that since each
  road segment has two possible orientations, we chose to display the one with
  highest value.}
  \label{fig:city_values}
\end{figure}

\section{Conclusion}
\label{sec:conclusion}
We have presented a novel method for apprenticeship learning, based on the
principle that agents should not only learn what to do, but should also learn
what {\em not} to do by observing non-experts’ behaviours. The illustration on
the 16x16 grid clearly demonstrates that learning solely from experts (as in
most apprenticeship algorithms) can lead to certain important aspects of the
true underlying reward function to remain ``hidden'' from the algorithm.
Furthermore, our approach is inherently unambiguous as it will always produce
the same weight vector, given a fixed set of expected features, thereby
overcoming solution degeneracy. This reduced variability in solutions is in
stark contrast to the method in \citep{abbeel04apprenticeship}, whose resulting solutions are greatly
dependent on the starting policy. An additional advantage of our approach over
closely related methods (such as \citep{abbeel04apprenticeship} and \citep{ratliff06maximum}) is that we are not required to
solve any MDPs at each iteration. Rather, our method produces a weight vector
$w$ based solely on the expected features of the demonstrators.

Although our results for passenger-finding strategies are very promising, we
are convinced we can do much better if we consider different time-slots in
isolation. In this paper we have examined the behaviour of the different
drivers throughout the weekdays, disregarding important differences between
different time slots (such as between rush hour and late night shifts).

We were fortunate to have a scoring function that produced a reliable ranking
of the taxi drivers for our problem. For other problems, however, it may be
difficult to properly rank the different demonstrators. The slack variables
$\varepsilon^r_i$ and $\varsigma^{r+1}_i$ provide a way to determine improperly
ranked demonstrators. One could take either an incremental or a pruning
approach. In the incremental approach one begins with a small set of
demonstrators and incrementally adds new ones (thereby providing more
information about the underlying reward function to the algorithm) as long as
the new demonstrator is not coupled with a positive $\varepsilon$ or
$\varsigma$ value. In the pruning approach, one could remove (or swap) the
expert with the highest $\varepsilon$ or $\varsigma$ value.

We have also used our algorithm to find optimal navigating routes through our
road network. Although there exist many algorithms for finding optimal routes
to a destination, these usually rely on a pre-existing reward function. Noting
that taxi drivers usually know the road network of a city better than regular
drivers, we would like to learn from their navigational behaviours. Finding a
good ranking function has proved somewhat daunting, as we often find certain
demonstrators improperly ranked. We have used the pruning approach mentioned
above to remove these outliers and this has greatly improved the output of our
algorithm, producing very promising preliminary results.  A difficulty with
this type of problem is that an expert’s expected features will probably not
cover the whole road network, resulting in many areas with poor navigational
suggestions. By learning from the behaviour of non-experts, our method is able
to cover a much larger area of the city.

\bibliography{irl}
\bibliographystyle{apalike}

\end{document}